\newtheorem{theorem}{Theorem}
\title{\LARGE \bf
Cooperative Schedule-Driven Intersection Control with Connected and Autonomous Vehicles }
\author{Hsu-Chieh Hu$^{1}$, Stephen F. Smith$^{2}$, Rick Goldstein$^{2}$% <-this % stops a space
\thanks{*This research was funded in part by the University Transportation Center on Technologies for Safe and Efficient Transportation at Carnegie Mellon University and the CMU Robotics Institute.}% <-this % stops a space
\thanks{$^{1}$Hsu-Chieh Hu is with Department of Electrical and Computer Engineering, Carnegie Mellon University, Pittsburgh, Pennsylvania, USA. {\tt\small hsuchieh@andrew.cmu.edu}}%
\thanks{$^{2}$Stephen F. Smith and Rick Goldstein are with the Robotics Institute, School of Computer
Science, Carnegie Mellon University, Pittsburgh, Pennsylvania, USA. 
        {\tt\small\{sfs, rgoldste\}@cs.cmu.edu}}%
}
\begin{document}

\maketitle
\thispagestyle{empty}
\pagestyle{empty}

%%%%%%%%%%%%%%%%%%%%%%%%%%%%%%%%%%%%%%%%%%%%%%%%%%%%%%%%%%%%%%%%%%%%%%%%%%%%%%%%
\begin{abstract}  % put your abstract here!

%Recent work in decentralized, schedule-driven traffic control has demonstrated the ability to improve the efficiency of traffic flow in complex urban road networks. In this approach, a scheduling agent is associated with each intersection. Each agent senses the traffic approaching its intersection and in real-time constructs a schedule that minimizes the cumulative wait time of vehicles approaching the intersection over the current look-ahead horizon. In this work, we propose a cooperative algorithm for both connected and autonomous vehicles (CAV) and schedule-driven traffic control to create better traffic flow in the city. The algorithm enables scheduling agent to change the approaching platoon through using wireless communication to control the velocity of vehicles according to the generated timing plans. The platoon is thus shifted toward a new shape that has smaller cumulative delay. We then demonstrate how this algorithm outperforms the original approach in a real-time traffic signal control problem.

Recent work in decentralized, schedule-driven traffic control has demonstrated the ability to improve the efficiency of traffic flow in complex urban road networks. In this approach, a scheduling agent is associated with each intersection. Each agent senses the traffic approaching its intersection and in real-time constructs a schedule that minimizes the cumulative wait time of vehicles approaching the intersection over the current look-ahead horizon. In this paper, we propose a cooperative algorithm that utilizes both connected and autonomous vehicles (CAV) and schedule-driven traffic control to create better traffic flow in the city. The algorithm enables an intersection scheduling agent to adjust the arrival time of an approaching platoon through use of wireless communication to control the velocity of vehicles. The sequence of approaching platoons is thus shifted toward a new shape that has smaller cumulative delay. We demonstrate how this algorithm outperforms the original approach in a real-time traffic signal control problem.

\end{abstract}

\section{Introduction}

Traffic congestion has been becoming an increasingly critical problem and is getting worse due to population shifts to urban areas and the high usage rate of vehicles. Traffic jams are now common on urban surface streets, where It is commonly recognized a better optimization of traffic signals could lead to substantial reduction of traffic congestion. A recent development in decentralized online planning to traffic signal control problem has achieved significant improvements to urban traffic flows through real-time, distributed generation of long-horizon, signal timing plans. \cite{xie2012schedule,smith2013smart} The key idea behind this {\em schedule-driven} approach is to formulate the intersection scheduling problem as a single machine scheduling problem, where input jobs are represented as sequences of spatially proximate vehicle clusters (approaching platoons, queues). This aggregate representation enables efficient generation of long horizon timing plans that incorporate multi-hop traffic flow information, and thus network-wide coordination is achieved through exchange of schedule information among neighboring intersections. 

Within the schedule-driven traffic control approach, a generated signal timing plan is executed in rolling horizon fashion, and is recomputed every second or so to account for uncertain traffic conditions. Each time a timing plan is regenerated, the instantaneous state of approaching cluster sequences is used as a predictive model, to provide a tractable estimation of current demand that preserves the non-uniform nature of real-time traffic flows.
%every second or at an even smaller time-scale to compute the most recent schedule. 
%Specifically, the scheduling agent of each intersection recomputes the schedule based on the instant cluster sequences that provide short-term variability of traffic flows and preserve the non-uniform nature of real-time flows.The computed timing plans including starting time for each vehicle to access intersection are generated continually and able to reflect current traffic conditions precisely.
This information, which includes the vehicles that comprise each cluster and their respective arrival times at the intersection, not only provides a basis for generating signal timing plans that optimize the actual traffic on the road, but also paves a way to further optimize flows through collaboration between infrastructure (the intersection scheduling agents) and vehicles. 

The recent development of connected and autonomous vehicle (CAV) technologies (such as vehicle-to-infrastructure (V2I) communication and autonomous vehicle control), provides opportunities to improve the driver's behavior, and, in the longer term, to even control the movement of vehicles, for purposes of improving the efficiency and safety of transportation. The emergence of autonomous vehicle control enables rapid agile response to the traffic information sent by the traffic signal system, which could not necessarily be expected for human drivers. Hence, the availability of such technologies provides new flexibility to the design of new traffic control systems. On the other hand, the rich traffic information that will be made available by V2I communication technologies can enable the development of better traffic control strategies for optimizing a given performance objective, e.g, delay or capacity (throughput).

In this paper, we propose an algorithm that achieves improved traffic flow efficiency through collaboration between schedule-driven traffic control and CAVs. The proposed algorithm takes advantage of the detailed real-time timing information provided by schedule-driven traffic control and the availability of autonomous control (or a driver advisory system) through CAVs to establish a framework for interaction. At the beginning of each planning cycle, the intersection scheduling agent first generates a signal timing plan based on the detected traffic flow as before. Then, based on this timing plan, the agent determines a set of velocities for all approaching vehicles (essentially speeding up and slowing down selected approaching vehicles) that enables generation of a better signal timing plan.  After receiving its suggested speed, each approaching vehicle controls its speed accordingly. The cluster representation is adjusted through iterative application of two operations, each of which involve merging and splitting of existing clusters: (1) Re-sequencing vehicles belonging to different traffic flows (i.e., belonging to different signal phases) by schedule-driven traffic control, and (2) Adjusting the speed of approaching vehicles and intersections such that the gaps between clusters competing for green time are enlarged. It is shown that, in scenarios with heavy congestion, the cooperative method outperforms schedule driven control, resulting in an additional delay reduction of $19\%$. Furthermore, this algorithm is still feasible in the case that the penetration rate of CAV is low.

The remainder of the paper is organized as follows. We first introduce related work and schedule-driven traffic control. Next, the algorithm necessary to achieve cooperation and its theoretical analysis are discussed. Then, an empirical analysis of the composite approach  is presented. Finally, some conclusions are drawn.

\section{Related Work}

%\subsection{Intersection Control}
%Traditionally, there are three general approaches to doing so: a) fixed-time; b) actuated; and c) adaptive. The earliest implementations are based on an offline fixed traffic signal timing optimized using historical traffic data. In certain less-congested areas, actuated signals (e.g., push buttons, loop detectors) may be appropriate; however, these must be programmed to minimize delay. The later advancements in traffic signal control have used adaptive signals, which can be categorized into centralized and decentralized approaches. In urban environments, centralized traffic signal control that adjust signal timing plan parameters (e.g., cycle time, green time split) according to actual sensed traffic data \cite{Robertson1991,Lowrie1992,heung2005coordinated,gettman2007data} have been proposed to solve the outdated problem due to the use of fixed timing. However, these approaches are designed to accommodate continuous gradual change in traffic patterns (typically adjusting parameters after integrating information for several minutes), and are not responsive to real-time traffic events and disruptions. Alternatively, decentralized online panning approaches have been proposed \cite{sen1997controlled,gartner2002optimized,shelby2001design,cai2009adaptive,jonsson2011scaling}. These approaches solve the problem of scalability in principle, but have historically had difficulty computing plans in real-time with a sufficiently long horizon to achieve network-level coordination.  

%\subsection{Intersection Control with CAV}
Traditionally, there are three general approaches to control traffic signal: a) fixed timing; b) actuated; and c) adaptive. The earliest implementations are based on a fixed timing method optimized using historical traffic data offline. The later advancements have used actuated or adaptive signals. Then, if all cars are equipped with wireless communication technologies, e.g., Dedicated Short Range Communications (DSRC), to communicate with a centralized infrastructure, we can optimize the traffic flow by ordering the phases of traffic signal more efficiently. In \cite{priemer2009decentralized,goodall2013traffic} information from equipped vehicles is used to determine demands and optimize the cycle length and green splits of a traffic signal once every cycle. In \cite{he2012pamscod} the presence of platoons is detected using V2I communication and a mixed integer non-linear program is solved to produce the optimal phasing sequence. However, this approach does not scale well for generating long horizon plans and does not deal with uncertainty of traffic states.

Traffic flow can also be optimized by searching for the optimal discharge sequence for each individual vehicle. One possible approach is request-based framework where each vehicle reserves a space through sending request in advance. For example, \cite{dresner2008multiagent} proposed a novel intersection control method called Autonomous Intersection Management (AIM), and in particular described a First Come, First Served (FCFS) policy to direct autonomous vehicles when to pass through the intersection. They showed that by leveraging the capacity of computerized driving systems FCFS significantly outperforms traditional traffic signals. Another similar approaches is \cite{tachet2016revisiting} in which the system is slot-based and able to double capacity. Although those approaches are promising, they assume perfect penetration of connected vehicles and may not be realized in near future. We need a more practical approach to bridge between traditional methods (i.e., traffic signals) and request-based methods.

Taking advantage of autonomously controlled vehicles, and the use of information from connected vehicles for intersection control has been investigated in several researches. The trajectory of fully autonomous vehicles can be manipulated to optimize
an objective function \cite{li2006cooperative,zohdy2012game,lee2012development,liang2018signal}. Those approaches can achieve either better safety or efficiency through interaction between intersections and vehicles. \cite{au2010motion} propose an extension of AIM to enable vehicles to apply motion planning for optimizing speed.

\section{Schedule-Driven Traffic Control}
%clustering
As indicated above, the key to the single machine scheduling problem formulation of the schedule-driven approach of \cite{xie2012schedule} is an aggregate representation of traffic flows as sequences of clusters $c$ over the planning (or prediction) horizon. Each \textit{cluster} $c$ is defined as $(|c|, arr, dep)$, where  $|c|$, $arr$ and $dep$ are number of vehicles, arrival time and departure time respectively. Vehicles entering an intersection are clustered together if they are traveling within a pre-specified interval of one another. The clusters become the jobs that must be sequenced through the intersection (the single machine). Once a vehicle moves through the intersection, it is sensed and grouped into a new cluster by the downstream intersection.The sequences of clusters provide short-term variability of traffic flows at each intersection and preserve the non-uniform nature of real-time flows. Specifically, the \textit{road cluster sequence} $C_{R,m}$ is a sequence of $(|c|, arr, dep)$ triples reflecting each approaching or queued vehicle on entry road segment $m$ and ordered by increasing $arr$. Since it is possible for more than one entry road to share the intersection in a given \textit{phase} (a phase is a compatible traffic movement pattern, e.g., East-West traffic flow), the \textit{input cluster sequence} $C$ can be obtained through combining the road cluster sequences $C_{R,m}$ that can proceed concurrently through the intersection. The travel time on entry road $m$ defines a finite horizon ($H_m$), and the prediction horizon $H$ is the maximum over all roads.

%dynamic scheduling
Every time the cluster sequences along each approaching road segment are determined, each cluster is viewed as a non-divisible job and a forward-recursion dynamic programming search is executed in a rolling horizon fashion to continually generate a phase schedule that minimizes the cumulative delay of all clusters. The frequency of invoking the intersection scheduler is once a second, to reduce the uncertainty associated with clusters and queues. The process constructs an optimal sequence of clusters that maintains the ordering of clusters along each road segment, and each time a phase change is implied by the sequence, then a delay corresponding to the intersection's yellow/all-red changeover time constraints is inserted based on Algorithm \ref{pd}.  If the resulting schedule is found to violate the maximum green time constraints for any phase (introduced to ensure fairness), then the first offending cluster in the schedule is split, and the problem is re-solved.

Formally, the resulting \textit{control flow} can be represented as a tuple $(S, C_{CF})$ shown in Figure~\ref{cf}, where $S$ is a sequence of phase indices, i.e., $(s_1, \cdots, s_{|S|})$, $C_{CF}$ contains the sequence of clusters $(c_1, \cdots, c_{|S|})$ and the corresponding starting time after being scheduled. More precisely, the delay that each cluster contributes to the cumulative delay $\sum_{k = 1}^{|S|} d(c_{k})$ is defined as 
\begin{equation}
d(c_{k}) = |c_{k}| \cdot (ast - arr(c_{k})),
\end{equation} 
where $ast$ is the actual start time that the vehicle is allowed to pass through, which is determined by Algorithm \ref{pd}. Note that $ast$ is determined by the $arr$ and permitted start time ($pst$) described in Algorithm \ref{pd}.  For a partial schedule $S_k$, the corresponding state variables are defined as a tuple, $(s, pd, t, d)$, where $s$ is phase index and $pd$ is duration of the last phase, $t$ is the finish time of the $k$th cluster and $d$ is the accumulative delay for all $k$ clusters. The state variable of $S_k$ can be updated from $S_{k-1}$. Algorithm \ref{pd} is based on a greedy realization of planned signal sequence, where $MinSwitch(s, i)$ returns the minimum time required for switching from phase $s$ to $i$ and $slt_i$ is the \textit{start-up lost time} for clearing the queue in the phase $i$. We can use $t$ and $MinSwitch(s, i)$ to derive $pst$. The optimal sequence (schedule) $C_{CF}^*$ is the one that incurs minimal delay for all vehicles.

 \begin{figure}[!htbp]
\centering
\includegraphics[scale = 0.35]{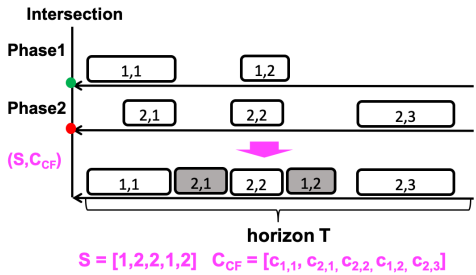}
\caption{The resulting control flow $(S, C_{CF})$ calculated by scheduling agents: each block represents a vehicular cluster. The shaded blocks represent the delayed clusters.}
\label{cf}
\end{figure}

\begin{algorithm}
\begin{algorithmic}[1]
\Require~~ 1) $(s, pd, t, d)$ of $S_{k-1}$ ; 2) $s_k$ 
\State $i = s_k$; $c =$ next job of phase $i$
\State $pst = t + MinSwitch(s, i)$ \Comment{Permitted start time of $c$}
\State $ast = \max(arr(c), pst)$  \Comment{Actual start time of $c$}
\If{$s \neq i$ and $pst > arr(c)$} 
$ast = ast + slt_i$ \EndIf 
\State $t = ast + dep(c) - arr(c)$ \Comment{Actual finish time of $c$}
\If{$s \neq i$} { $pd = t - pst $} \Else{ \quad$pd = pd + (t- pst)$} \EndIf 
\State $d = d + |c| \cdot (ast - arr(c))$ \Comment{Total accumulative delay}
\State\Return $(pd, t, d)$
\end{algorithmic}
\caption{Calculate $(pd, t, d)$ of $S_k$}
\label{pd}
\end{algorithm}

%%outflow information
%To collaborate with neighbor intersections, each intersection receives a projection of expected outflows from its upstream neighbors and plugs it into its local computation. After starting to execute its schedule, the resulting flows are communicated to its downstream neighbors. Since a vehicle may enter into/leave from intersection via different road segments, the clusters that are propagated to neighbors over extended look-ahead horizon $H$ are split and weighted by turning movement proportion. Thus, the weight $|c|$ of the non-local cluster will be a fractional number to reflect the uncertainty of movement. The turning movement proportion data is estimated by taking average of traffic flow rates for different phases. All approaching vehicles are sensed through the intersection's lane detectors. 

\section{Cooperative Algorithm}
In this section, we introduce an algorithm that enables scheduling agents to assign velocities to the approaching vehicles through V2I communication. The goal is to further reduce cumulative delay compared to the baseline case  above, where each vehicle passively follows timing plans at their free flow speeds. In brief, each scheduling agent computes a schedule every second. Based on the current cluster model and arrival prediction at any instant, the earliest time that any given vehicle can feasibly access the corresponding intersection (called the permitted starting time) can be determined by solving a single-machine scheduling problem to minimize cumulative delay. With these permitted times, the speeds of vehicles are adjusted by the scheduling agent through communication, and then the platoons are shifted to a new shape that incurs smaller cumulative delay. Basically, we revise the cluster representation through two operations: a) re-sequencing vehicles to minimize delay, and b) sending control message to adjust speed of vehicles. 

\subsection{Scheduling Information}
The schedule generated by the scheduling agent contains useful timing information including the permitted start time ($pst$), actual start time ($ast$), phase time ($pd$) and arrival time ($arr$), etc. The approaching vehicles are able to apply these information to change their movement in order to achieve better traffic flow that has smaller delay. The original schedule-driven traffic control uses this information to control traffic signal timing and thus control vehicles in an indirect way. However, as CAV technologies are incorporated, the timing information allows the intersection to control vehicles directly through V2I communication. For instance, with $pst(c_i)$, we know the earliest time to cross the current intersection for cluster $c_i$. Each vehicle could check their $arr(c_i)$ to decide whether to speed up or slow down under a pre-defined safety constraint. If $arr(c_i) > pst(c_i)$, the vehicles of $c_i$ could speed up to decrease the corresponding finish time $t$. On the other hand, if $arr(c_i) \leq pst(c_i)$, it is beneficial for the first several vehicles of $c_i$ to maintain a lower speed and merge to the platoons from behind.

\subsection{Optimizing Schedule via Changing Platoons}
After acquiring the $pst$ and $arr$ of each cluster from the produced schedule, the scheduling agent starts to query the approaching vehicles about their current speed for improving delay further. We propose a greedy algorithm to improve the schedule with the speed information by iterating through each cluster and computing a new velocity that either shortens the corresponding phase or increases its crossing speed. First, we check whether the current cluster is a beginning of a new phase in Algorithm \ref{algo}. If it is a new phase, we record the $pst_p = pst(c_i)$ for computing reduction $\delta$ of previous phase length, where $p$ is index of phase and $pst_p$ is the starting time of $p_{th}$ phase. We consider two cases to compute $\delta$: 
\begin{enumerate}
\item If it is not that all previous vehicles need to wait for green time, $\delta$ is equal to the difference between travel time and the updated travel time of the last vehicle in previous phase, where the computation of the updated travel time will be described below.

\item  If $pst_p - \delta \leq pst_{p-1}$ then the previous phase is too short, we need to move current phase forward and set $\delta  = 0$. After computing $\delta$, we know if the current phase is able to start earlier.
\end{enumerate}

\begin{algorithm}
\begin{algorithmic}[1]
%\While{true}
\State Apply forward recursion process
\State Retrieve the schedule solution $(S^*,C^*_{CF})$.
\State Query all vehicles about their current velocity $v_i$.
\For {$i = 1$ to $C^*_{CF}$}
\State Get $pst(c_i)$ and $s_i$
\If{$s_i \neq phase$} \Comment{new phase start}
\State $pst_p = pst(c_i)$; $phase = s_i$; $\delta = 0$
\If{$\textit{end} > pst_{p-1}$}  \Comment{not all vehicles wait}
\State $\delta = \textit{end} - \max(pst_{p-1}, \textit{updated\_end})$ 
\EndIf
\If{$pst(c_i) - \delta \leq pst_{p-1}$}  
\State $\delta = 0$ \Comment{$\delta$ is not too large.}
\EndIf
\State $\textit{end} = \textit{updated\_end} = 0$;$p = p + 1$
\EndIf
\State $pst(c_i) = pst(c_i) - \delta$ \Comment{update permitted start time}
\State $\textit{end} = \max(\textit{end}, arr(c_i))$ 
\State Compute $v_i'$  and $\textit{updated\_end}$ by Algorithm \ref{expect_speed}.
\If{$\textit{is\_safe}(v_i')$}
\State Send speed control message to all vehicles in $c_i$
\EndIf
%\State Update $pst$ of vehicles on the same lane
\EndFor
%\EndWhile
\end{algorithmic}
\caption{Splitting clusters according to previous schedule}
\label{algo}
\end{algorithm}

We maintain two variables, $\textit{end}$ and $\textit{updated\_end}$, to record the finish time of the previous phase before and after changing speed.  $\textit{end}$ is updated with $arr(c_i) $ in Line $14$, and $\textit{updated\_end}$ is updated based on the new speed computed in Line $15$ by Algorithm \ref{expect_speed}. The $\delta$ used for moving the phase finish time forward can be calculated from the difference between $\textit{end}$ and $\textit{updated\_end}$. Other than $\textit{updated\_end}$, the new speed in the control message that is sent is also computed by Algorithm \ref{expect_speed}.  Algorithm \ref{expect_speed} is based on $pst(c_i)$ and $arr(c_i)$, where $t_{c}$ in Line $1$ is  current time, Line $2$ defines a ratio $\gamma$ that is current travel time over the updated travel time, and the threshold in Line $3$ is used for focusing on those critical vehicles and avoiding unnecessary speed change to lower communication overhead. If deviation of $\gamma$ from $1$ is large enough, we will update the speed according to a speed planning function $\textit{new\_speed}(\cdot)$, where $\textit{new\_speed}(\cdot)$ is a function that assigns speed for each vehicle based on current velocity $v_i$ and $\gamma$. A suitable function based on the Intelligent Driver Model (IDM) \cite{treiber2000congested} is
\begin{equation}
%v_i' = (v_i - \Delta)^+ + (v_{up} - v_i + \Delta) \times (1 - \exp(-\gamma)), 
v_i' =  v_i + a_{max}(1 - \gamma^{-\omega} - (\frac{s^*}{s})^2) \approx  v_i + a_{max}(1 - \gamma^{-\omega} )
\end{equation}
where $a_{max}$ is the maximum acceleration and $\omega$ is acceleration exponent. We assume the current headway $s^*$ is much larger than desired headway $s$, so that we can ignore the correction term. On the other hand, those vehicles with smaller deviation of $\gamma$ from $1$ are at the original speed for maintaining headway and space between vehicles. As we have the new speed,  $\textit{updated\_end}$ is updated in order to compute $\delta$ for modifying schedule. Figure~\ref{flow_chart} shows the three steps involved in changing a platoon sequentially and continually in a real-time manner.
   
 \begin{figure}[!htbp]
\centering
\includegraphics[scale = 0.45]{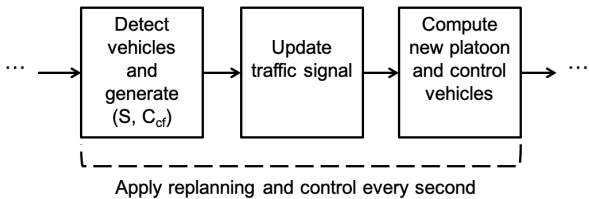}
\caption{The replanning and control cycle}
\label{flow_chart}
\end{figure}

\begin{algorithm}
\begin{algorithmic}[1]
\Require: $v_i$, $arr(c_i)$, $pst(c_i)$, $\textit{updated\_end}$
\State Get current time $t_c$ %and \textit{dist\_to\_intersection}
%\State Compute $arr(c_i) = \textit{dist\_to\_intersection}/v_i + t$
\State $\gamma = (arr(c_i)-t_c) /(pst(c_i) - t_c)$
\If{$\gamma > \textit{thr}_{up}$ and $\gamma < \textit{thr}_{down}$}
\State $v_i' = \textit{new\_speed}(v_i ,\gamma$)
\State $arr' = v_i/v_i' \times (arr(c_i) - t_c) + t_c$
\State $\textit{updated\_end} = \max(\textit{updated\_end}, arr')$
\Else
\State $\textit{updated\_end} = \max(\textit{updated\_end}, arr(c_i))$
\EndIf

\State \Return: $v_i',  \textit{updated\_end}$

\end{algorithmic}
\caption{Calculate($v_i',  \textit{updated\_end}$) of $c_i$ }
\label{expect_speed}
\end{algorithm}

Based on Algorithm \ref{algo}, we can either speed up or slow down vehicles to improve the phase schedule (i.e., timing plan).  As the vehicles speed up, it provides more space for the rear vehicles and makes the platoon more compact. For slowing down, it helps the approaching vehicles to keep a higher speed when crossing intersections. We prove that by applying Algorithm \ref{algo}, the cumulative delay is guaranteed to be less than the previous schedule. We state the following theorem of our proposed algorithm. 
\begin{theorem}
Let $S$ be the schedule produced by the baseline scehdule-driven traffic control approach. The cumulative delay of schedule $S'$, produced from schedule $S$ by applying Algorithm \ref{algo}, is less and equal to the cumulative delay of $S$
\label{opt}
\end{theorem}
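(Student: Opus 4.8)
The plan is to argue cluster by cluster, in the order in which Algorithm~\ref{algo} processes them, that the contribution of each cluster to the cumulative delay $\sum_k |c_k|\,(ast(c_k)-arr(c_k))$ does not increase when we pass from $S$ to $S'$. Note that $S'$ keeps exactly the phase sequence $S^*$ of $S$ and reorders nothing; it only slides permitted start times earlier and rescales vehicle speeds, hence changes only the timing data $(arr,pst,ast)$. (Algorithm~\ref{algo} may re-partition vehicles into clusters, but cumulative delay is the sum over individual vehicles of crossing-time minus arrival-time, so it is invariant to re-clustering and we may reason at the cluster level.) Throughout, I would carry three invariants over the processed prefix: (i) the running accumulated delay satisfies $d'_k \le d_k$; (ii) for the current phase $p$, $pst'_p \le pst_p$; and (iii) the running finish time satisfies $t'_k \le t_k$.

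First I would dispatch the phase-start slide. When Algorithm~\ref{algo} detects the start of a new phase $p$ it sets $pst'_p = pst_p - \delta$, where by Lines~7--9 $\delta \ge 0$ is the amount by which the speed changes applied within phase $p-1$ pulled that phase's effective finish time earlier (the gap between the old latest arrival $\textit{end}$ and the new effective finish $\max(pst_{p-1},\textit{updated\_end})$), and by Line~11 $\delta$ is reset to $0$ whenever sliding by that much would push $pst'_p$ below $pst'_{p-1}$, i.e.\ would violate the phase ordering or the $MinSwitch$ changeover gap. Thus $pst'_p \le pst_p$ holds, $S'$ stays feasible with respect to the changeover constraints, and the maximum-green constraints are inherited because every phase of $S'$ is only shifted earlier and (weakly) shortened.

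Next I would bound the delay of a single cluster $c_i$ in phase $p$, splitting into the two cases Algorithm~\ref{algo} distinguishes through the sign of $\gamma-1$ in Algorithm~\ref{expect_speed}. If $arr(c_i) > pst(c_i)$, then $c_i$ waits for nothing and contributes zero delay in $S$; Algorithm~\ref{expect_speed} speeds it up, so $arr'(c_i) < arr(c_i)$, but the deviation threshold $\textit{thr}_{up}$ on $\gamma$ together with the $\textit{is\_safe}(\cdot)$ test keep the new speed at most $\gamma v_i$, whence $arr'(c_i) \ge pst(c_i) \ge pst'(c_i)$ and the contribution stays zero, while the earlier finish of this cluster is precisely what makes $\delta$ non-negative for phase $p+1$. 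If instead $arr(c_i) \le pst(c_i)$, then $c_i$ is slowed toward its permitted time, so $arr(c_i) \le arr'(c_i) \le pst(c_i)$ (again bounded by the threshold and safety test), and since $pst'(c_i) = pst(c_i)-\delta \le pst(c_i)$ and $ast(c_i) = \max(arr(c_i),pst(c_i)) = pst(c_i)$ we get $ast'(c_i)-arr'(c_i) = \max(arr'(c_i),pst'(c_i)) - arr'(c_i) \le pst(c_i) - arr(c_i)$, so the delay of $c_i$ does not increase. In both cases invariants (i) and (iii) are maintained: (i) by appending a non-increasing delay increment, and (iii) because the finish time of a phase is governed by its latest permitted start and its clusters' arrivals, both of which only move earlier, using that a sped-up platoon is temporally no longer than before and a slowed platoon is re-absorbed into the platoon behind it. Applying this along the whole schedule and summing the non-increasing per-cluster delays yields that the cumulative delay of $S'$ is at most that of $S$.

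The step I expect to be the main obstacle is the ``no-overshoot'' claim inside the case analysis: that a sped-up vehicle does not arrive \emph{before} its new permitted start time --- which would convert zero delay into genuine waiting delay and would also break invariant (iii) --- and, symmetrically, that a slowed vehicle does not drift \emph{past} its permitted time. Making this rigorous forces one to commit to precise properties of $\textit{new\_speed}(\cdot)$ and $\textit{is\_safe}(\cdot)$: essentially that the IDM-based update $v_i' = v_i + a_{max}(1-\gamma^{-\omega})$ moves monotonically toward, and never crosses, the target speed $\gamma v_i$ (which rests on a mild parameter condition, roughly $a_{max}\omega \le v_i$), and a companion bound on how a cluster's temporal extent $dep(c_i)-arr(c_i)$ changes under a speed change, so that invariant (iii) indeed holds. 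A secondary bookkeeping point is to ensure that the $\textit{updated\_end}$ feeding $\delta$ aggregates only those new speeds that actually pass $\textit{is\_safe}(\cdot)$ and are therefore adopted, so that $\delta$ never over-estimates the achievable slide; once these are settled, the induction closes.
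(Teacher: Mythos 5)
Your proposal matches the paper's own argument: the published proof is exactly your two-case per-cluster analysis — speeding up keeps the contribution at zero (the paper writes $ast(c_i)=\max(arr',pst(c_i))=arr'$) and shortens the phase duration $pd$, while slowing down gives $|c_i|\,(ast(c_i)-arr') \le |c_i|\,(ast(c_i)-arr(c_i))$ with $ast(c_i)=pst(c_i)$. The extra machinery you add — the $\delta$/$pst$ shift invariants, feasibility of the changeover constraints, and the ``no-overshoot'' condition you flag as the main obstacle — does not appear in the paper at all (it implicitly assumes $arr'\ge pst(c_i)$ in the speed-up case), so your write-up is, if anything, more careful than the published proof.
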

\begin{proof}
We have two different situations. First, if the vehicle of $c_i$ is speeding up, we know $arr'$ is less than $arr(c_i)$. Although the delay contribution $|c_{i}| \cdot (ast(c_i) - arr') = |c_{i}| \cdot (arr' - arr') = 0$, we have smaller $ast(c_i) = \max(arr', pst(c_i)) = arr' < arr(c_i)$ and thus shorter phase duration $pd$. Second, if the vehicle of $c_i$ is slowing down, $arr'$ is larger than $arr(c_i)$ and $|c_{i}| \cdot (ast(c_i) - arr') \leq |c_{i}| \cdot (ast(c_i) - arr(c_i))$, where $ast(c_i) = pst(c_i)$. Thus, the cumulative delay is always less than the previous one.
\end{proof}

The cluster size $|c_i|$ affects the number of vehicles in each phase and thus has impact on how many vehicles will speed up to contribute to delay reduction in Algorithm \ref{algo}. In the following section, we will discuss how clustering affects the performance of the proposed algorithm.

\subsection{Cluster Size and Delay-Capacity Tradeoff}
As mentioned in the previous sections, vehicles entering an intersection are clustered together if they are traveling within a pre-specified interval of one another. By changing this pre-specified interval, the cluster distribution is changed. If we have a larger interval, we could easily have large cluster size under high traffic demand. Although it could be computationally advantageous as the scheduling agent is computing schedule for large clusters and has higher throughput (or capacity, i.e., it does not require signal phases to switch too frequently), it may lose some benefits of using the cooperative algorithm, which requires better partition of platoons. On the other hand, it would be computational-intensive if the cluster size is always set to one. However, since a smaller interval is used, a favorable partition of platoon is generated with minimum cumulative delay, and it benefits the cooperative algorithm through either splitting or merging clusters accurately. We can obtain a tradeoff between maximizing capacity and minimizing delay by both tuning the interval and applying the cooperative algorithm as well. The benefit of using the cooperative algorithm is different on the different intervals, but it is guaranteed that the delay of both large and small intervals is improved by Theorem~\ref{opt}. 

Another possible way to improve the performance of applying a larger interval further is that we only reschedule the clusters with larger delay and their adjacent clusters in a single-vehicle scheduling manner, which is taking each vehicle as a single cluster. First, we have to identify a cluster $c_i$ with the largest delay, which is $\Delta = (ast(c_i) - arr(c_i))$, and all the vehicles arriving during that interval as a single batch. Then, single-vehicle scheduling is applied, so that it is unnecessary to run single-vehicle scheduling on all vehicles within the current look-ahead horizon. Finally, we run the Algorithm \ref{algo} to change the speeds of vehicles within that horizon. To improve delay continually, we may identify a second cluster with larger delay and repeat above process. By incorporating this additional rescheduling, we can reshuffle vehicles in a batch to acquire smaller delay and more flexible computation compared to the original approach.

\section{Experimental Evaluation}

In this section, we evaluate the proposed cooperative algorithm through comparison to two different approaches: a baseline schedule-driven method and a fixed timing method in a connected vehicle environment with perfect information. The version of the schedule-driven method we apply here is Expressive Real-Time Intersection Scheduling (ERIS) \cite{goldstein2018expressive}, which maintains separate estimates of arrival traffic for each lane and enables it to more accurately estimate the effects of scheduling decisions than original schedule-driven approach. For accurate scheduling information, the proposed cooperative algorithm is also implemented based on ERIS. The fixed timing method is to match the given traffic demand according to Websters formula. We present a comparison over the proposed method, ERIS, and a fixed timing method on two separate networks.

The evaluations of the proposed method are ran on the Simulation of Urban MObility (SUMO), which is a microscopic traffic simulator that simulates continuous road traffic for large road networks and control of traffic signals and vehicles. To retrieve vehicle information and manipulate their behavior, we interface through Traffic Control Interface (TraCI) \cite{wegener2008traci}. We assume that each vehicle has its own route as it passes through the network and measure how long a vehicle must wait for its turn to pass through the intersections (the delay or time loss). Tested traffic volume is averaged over sources at network boundaries. To assess the performance boost provided by the cooperative algorithm, we measure the average waiting time of all vehicles over five runs. All simulations run for $1$ hour of simulated time. Results for a given experiment are averaged across all simulation runs with different random seeds. In order to eliminate the effects of simulation start up and termination, the time loss of vehicles arriving within the middle $40$ minutes is reported. Vehicle arrivals are modeled as a Poisson process where the average arrival rate is set according to the desired level of congestion. We choose the IDM model with $\omega = 4$ for computing new speed, and other parameters are listed below: a) speed limit $65km/h$. b) $\textit{thr} = 1\pm0.4$. c) $a_{max} = 5m/s$.

\subsection{Simulation Results}
\begin{table*}[tp]
\centering
\scalebox{0.9}{
   \begin{tabular}{*{11}{c}}
   \toprule
      \multirow{2}{*}{}& \multicolumn{10}{c}{ (a) Average Delay (second) of Single Intersection}  \\
    
   \cmidrule(l){2-11}    & \multicolumn{2}{c}{Schedule-driven} \textbf{(0s)} &  \multicolumn{2}{c}{Cooperative}\textbf{(0s)} &  \multicolumn{2}{c}{Schedule-driven}\textbf{(3s)}&  \multicolumn{2}{c}{Cooperative} \textbf{(3s)}&   \multicolumn{2}{c}{Fixed timing}   \\
   & mean  & std. & mean & std.  & mean  & std. & mean & std  & mean &std. \\
    \midrule
 High demand& 33.52 &26.86 &  27.23& 21.86&37.52 &37.84 & 29.67& 26.83& 40.62  & 28.87\\
   Medium demand & 19.32& 20.10&  17.75 &15.41&19.62& 17.07& 17.83 &15.67& 25.67 & 18.64\\
   Low demand& 12.02 &13.97&  11.58 & 13.21 &12.69 &14.59& 12.38 & 13.73  & 22.44 & 19.00 \\
    %\bottomrule
  \end{tabular}
  }

  \centering
 \scalebox{0.9}{
   \begin{tabular}{*{11}{c}}
   \toprule
      \multirow{2}{*}{}& \multicolumn{10}{c}{ (b) Average Delay (second) of Three Intersections}  \\
    
   \cmidrule(l){2-11}    & \multicolumn{2}{c}{Schedule-driven}\textbf{(0s)} &  \multicolumn{2}{c}{Cooperative}\textbf{(0s)} &  \multicolumn{2}{c}{Schedule-driven}\textbf{(3s)} &  \multicolumn{2}{c}{Cooperative}\textbf{(3s)} &   \multicolumn{2}{c}{Fixed timing}   \\
   & mean  & std. & mean & std.  & mean  & std. & mean & std  & mean &std. \\
    \midrule
  High demand& 33.60 &22.33 & 29.75& 19.83 & 31.48 &21.68 & 29.55& 20.55 &  42.76 & 26.78\\
   Medium demand & 25.98&18.69 & 25.40 &17.19 & 25.21&19.23 & 24.22 &17.30& 32.27 & 21.96\\
   Low demand& 17.82 &15.93& 18.10 &15.27  &12.69 &14.59& 12.38 & 13.73 & 28.19 & 19.71 \\
    \bottomrule
  \end{tabular}
 }

   \caption{Average delay of single intersection and three intersections with different clustering intervals.}
  \label{tb1}
\end{table*}

%\begin{table*}[tp]
% \centering
% %\scalebox{0.8}{
%   \begin{tabular}{*{11}{c}}
%   \toprule
%      \multirow{2}{*}{}& \multicolumn{10}{c}{ Average Delay (second)}  \\
%    
%   \cmidrule(l){2-11}    & \multicolumn{2}{c}{Schedule-driven}(3s) &  \multicolumn{2}{c}{Cooperative}(3s) &  \multicolumn{2}{c}{Schedule-driven}(0s) &  \multicolumn{2}{c}{Cooperative}(0s) &   \multicolumn{2}{c}{Fixed timing}   \\
%   & mean  & std. & mean & std.  & mean  & std. & mean & std  & mean &std. \\
%    \midrule
%  High demand& 33.60 &22.33 & 29.75& 19.83 & 31.48 &21.68 & 29.55& 20.55 &  42.76 & 26.78\\
%   Medium demand & 25.98&18.69 & 25.40 &17.19 & 25.21&19.23 & 24.22 &17.30& 32.27 & 21.96\\
%   Low demand& 17.82 &15.93& 18.10 &15.27  &12.69 &14.59& 12.38 & 13.73 & 28.19 & 19.71 \\
%    \bottomrule
%  \end{tabular}
% %}
%
%   \caption{Average delay of three intersections.}
%  \label{tb2}
%\end{table*}

First, a network consisting of a single intersection with two lanes on the main road and one lane on the side street is examined. To explore how the cooperative algorithm performs under different demand, we categorize traffic demand into three different groups: low (363 cars/hour), medium (750 cars/hour), and high (1250 cars/hour). Other than different traffic demand, we test how different clustering intervals affect the performance. Average delay per vehicle (in seconds) and standard errors across a range of vehicle volumes are presented in Table \ref{tb1}(a). We also present the improvement when measured against fixed timing at each level of congestion.

With single-vehicle scheduling, where we have a smaller clustering threshold and take each vehicle as a single cluster, the cooperative algorithm outperforms schedule-driven approach and fixed timing plans for all tested levels of congestion. Under high traffic demand setting, the improvement of the cooperative algorithm is the largest and up to $19\%$ and $33\%$ compared to the two other approaches, which means more vehicles under this demand can be gained from the speed adjustment and scheduling. On the other hand, the performance is comparable for all three approaches under low and medium demand. The cooperative algorithm can only improve the delay of a small portion of vehicles, and most improvement in delay comes from obeying the produced schedule.

As the clustering threshold increases to $3$ seconds, the improvement of the cooperative algorithm over schedule-driven traffic control becomes more evident and ranges up to $21\%$ under high traffic demand. One reason is that for schedule-driven traffic control the reshuffling effect caused by scheduling is not able to create a platoon with smaller delay ($37$s v.s. $33$s) because of large average cluster. However, the delay of the cooperative algorithm only increases from $27$s to $29$s. If sufficient computation power is available we would only suggest to use smaller clustering threshold in order to have better performance of the cooperative algorithm.

Other than single intersection, a model of three connected intersections is evaluated in Table \ref{tb1}(b). The delay of the cooperative algorithm is still less than the schedule-driven traffic control approach under high traffic demand with two different clustering intervals as predicted in Theorem \ref{opt}. However, the improvement is not as large as compared with single intersection. When we have three connected intersections, we observe that the delay with larger clustering interval  is less than the case of single-vehicle scheduling as shown in the Table \ref{tb1}(b). Basically, a smaller clustering interval provides a better optimality for single intersection, but it may cast more traffic demand for neighbor intersections and harm network-level performance (or global performance). A better network coordination should resolve this issue \cite{hu2018bi}. The delay of the cooperative algorithm in the second and forth column seems not be affected by the clustering interval in a network a lot. The shift of platoons somehow provides a robustness when applying traffic scheduling within a transportation network.

\subsection{Partial Penetration of CAV}
Understanding the performance under different penetration rates is an essential issue for deploying the proposed methods under realistic scenarios. In the results presented thus far, we have assumed that all vehicles are controllable by the proposed algorithm. However, it may be difficult in practice to achieve such a high penetration rate. 

Table~\ref{penetration} presents the delay of the proposed algorithm under different penetration rates. We assume that partial vehicles are able to control their speed according to the messages sent by the intersection. Three penetration rates are tested to compare with full penetration rate. The results show that the proposed algorithm can still provide a considerable $14\%$  improvement compared with the baseline case as the penetration rate is $50\%$ and $70\%$. The improvement drops to $8\%$ at a penetration rate of $30\%$ which still represents a reasonable reduction in overall congestion. 

\begin{table}[!htbp]
\centering
  \scalebox{0.8}{
 %\resizebox{1.5\columnwidth}{!}{%
  \begin{tabular}{*{9}{c}}
   \toprule
      \multirow{2}{*}{}& \multicolumn{8}{c}{ Average Delay (second)}  \\
    
   \cmidrule(l){2-9}    & \multicolumn{2}{c}{100\%} &  \multicolumn{2}{c}{70\%} &  \multicolumn{2}{c}{50\%} &  \multicolumn{2}{c}{30\%}   \\
   & mean  & std. & mean & std.   & mean &std.  & mean &std. \\
    \midrule
 High demand& 27.23 &21.86 & 29.22& 23.64 &29.57   & 24.31&31.58   & 26.89\\
   Medium demand & 17.75&15.41 & 19.12 &17.50& 19.29  &16.38 &  19.40 &17.28\\
   Low demand& 11.58 & 13.21& 11.55 &13.85  & 11.55 &13.90 &  11.87 & 13.56\\
    \bottomrule
  \end{tabular}
  }
   \caption{Average delay under different penetration rates.}
     \label{penetration}
\end{table}

\section{Conclusion}
In this paper, we described a cooperative framework designed to enable the interaction between CAV and schedule-driven traffic control. The scheduling algorithm generates useful scheduling information continually to reflect real traffic conditions by a strategy of frequent replanning. This information is used to guide vehicles on how to adjust their velocity for minimizing delay. The cooperative algorithm is specified as a way to shift approaching platoons through both reshuffling vehicles that are arriving from different, competing directions and changing their speed. The proposed approach demonstrates that a lower delay can be obtained. Moreover, we can still get reasonable performance under lower penetration rate of CAV.

 The cooperative system was evaluated on a simulation model of a traffic signal control problem. Results showed that the interaction between scheduling and CAV improves average delay overall in comparison to both the baseline schedule-driven traffic control approach and a fixed timing approach, and that solutions provide substantial gain in highly congested scenarios. Future work will focus on the design of how to achieve a network-wide coordination through this vehicle-to-infrastructure communication.

\bibliographystyle{IEEEtran}
\bibliography{split}

\end{document}